\theoremstyle{plain}
\newtheorem{theorem}{Theorem}[section]
\newtheorem{proposition}[theorem]{Proposition}
\newtheorem{lemma}[theorem]{Lemma}
\theoremstyle{definition}
\newtheorem{definition}[theorem]{Definition}
\theoremstyle{definition}
\newtheorem{example}[theorem]{Example}
\theoremstyle{remark}
\newtheorem{remark}[theorem]{Remark}
\author{David Eklund\thanks{DTU Compute, Richard Petersens Plads,
Building 321, DK-2800 Kgs. Lyngby, Denmark.} \\ \url{daek@kth.se}
\and S{\o}ren Hauberg\footnotemark[1] \\ \url{sohau@dtu.dk}}
\newcommand{\propref}[1]{Proposition~\ref{#1}}
\newcommand{\lemmaref}[1]{Lemma~\ref{#1}}
\newcommand{\exref}[1]{Ex.~\ref{#1}}
\newcommand{\secref}[1]{Sec.~\ref{#1}}
\newcommand{\remref}[1]{Remark~\ref{#1}}
\newcommand{\figref}[1]{Fig.~\ref{#1}}
\newcommand{\RR}{\mathbb{R}}
\newcommand{\EE}{\mathbb{E}}
\newcommand{\E}{\EE}
\newcommand{\id}[1]{\operatorname{Id}_{#1}}
\newcommand{\var}[1]{\operatorname{var}(#1)}
\newcommand{\covar}[2]{\operatorname{covar}(#1,#2)}
\newcommand{\SO}[2]{\operatorname{SO}(#1,#2)}
\renewcommand{\vec}[1]{#1}
\newcommand{\mat}[1]{#1}
\newcommand{\x}[0]{\vec{x}}
\newcommand{\z}[0]{\vec{z}}
\newcommand{\J}[0]{\mat{J}}
\newcommand{\dif}[1]{\mathrm{d}#1}
\newcommand{\inner}[2]{\langle#1,#2\rangle}
\newcommand{\T}[0]{^{\top}}
\newcommand{\xtx}[1]{#1\T \mkern-1.5mu\relax #1}
\def\N{\mathcal{N}}
\def\M{\mathcal{M}}
\def\Z{Z}
\def\X{X}
\begin{document}

\title{Expected path length on random manifolds}

\maketitle

\begin{abstract}
  Manifold learning seeks a low dimensional representation that
  faithfully captures the essence of data. Current methods can
  successfully learn such representations, but do not provide a
  meaningful set of operations that are associated with the
  representation. Working towards \emph{operational representation
    learning}, we endow the latent space of a large class of
  generative models with a random Riemannian metric, which provides us
  with elementary operators. As computational tools are unavailable
  for random Riemannian manifolds, we study deterministic
  approximations and derive tight error bounds on expected distances.
\newline
\noindent {\bf Key words:}
random fields, random metrics on manifolds, Gaussian process latent
variable models, expected curve length.
\end{abstract}

\section{Introduction} \label{sec:intro}
  \emph{Manifold learning} is one of the cornerstones of unsupervised
  learning.  Classical methods such as \emph{Isomap} \cite{isomap},
  \emph{Locally linear embeddings} \cite{lle}, \emph{Laplacian
    eigenmaps} \cite{Belkin:2003:LED} and more
  \cite{Scholkopf99:kernelprincipal, Donoho5591} all seek a
  low dimensional embedding of high dimensional data that preserves
  prespecified aspects of data.  Probabilistic methods often view the
  data manifold as governed by a latent variable along with a
  generative model that describes how the latent manifold is to be
  embedded in the data space. The common theme is the quest for a
  low dimensional representation that faithfully captures the data.
  
  Ideally, we want an \emph{operational representation}, that is we
  want to be able to make mathematically meaningful calculations with
  respect to the learned representation. It has been argued
  \cite{H18} that a good representation should at least support the
  following:
  \begin{itemize}
    \item \textbf{Interpolation:} given two points, a natural unique
      interpolating curve that follows the manifold should exist.
    \item \textbf{Distances:} the distance between two points should be well defined
      and informally
      reflect the amount of energy required to transform one point to another.
    \item \textbf{Measure:} the representation should be equipped with a measure
      under which integration is well defined for all points on the manifold.
  \end{itemize}
  These are elementary requirements of a representation, but most nonlinear manifold
  learning schemes do not imply or provide such operations.

In the sequel we will use the following notation. We denote by $\Z$
the $d$-dimensional \emph{representation} or \emph{latent space},
which is learned from data in the \emph{observation space}
$\X$. Latent points are denoted $\z_i \in \Z$, while corresponding
observations are $\x_i \in \X$.
  
  \begin{figure}
    \begin{center}
    \includegraphics[width=0.3\columnwidth]{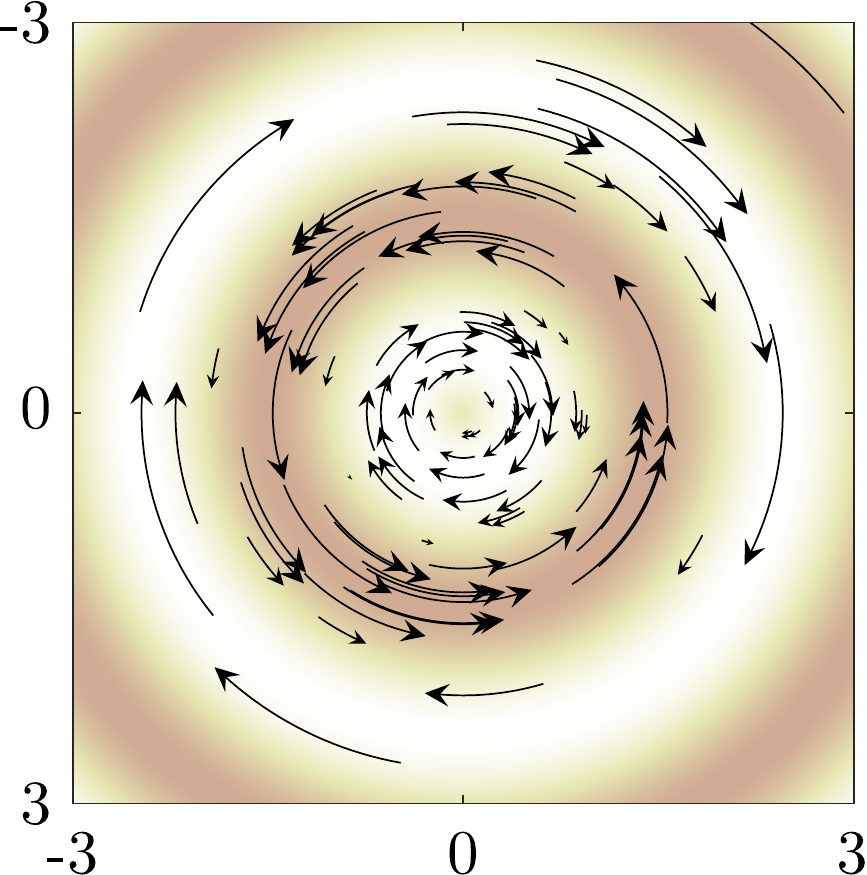}
    \hspace{0.07\columnwidth}
    \includegraphics[width=0.3\columnwidth]{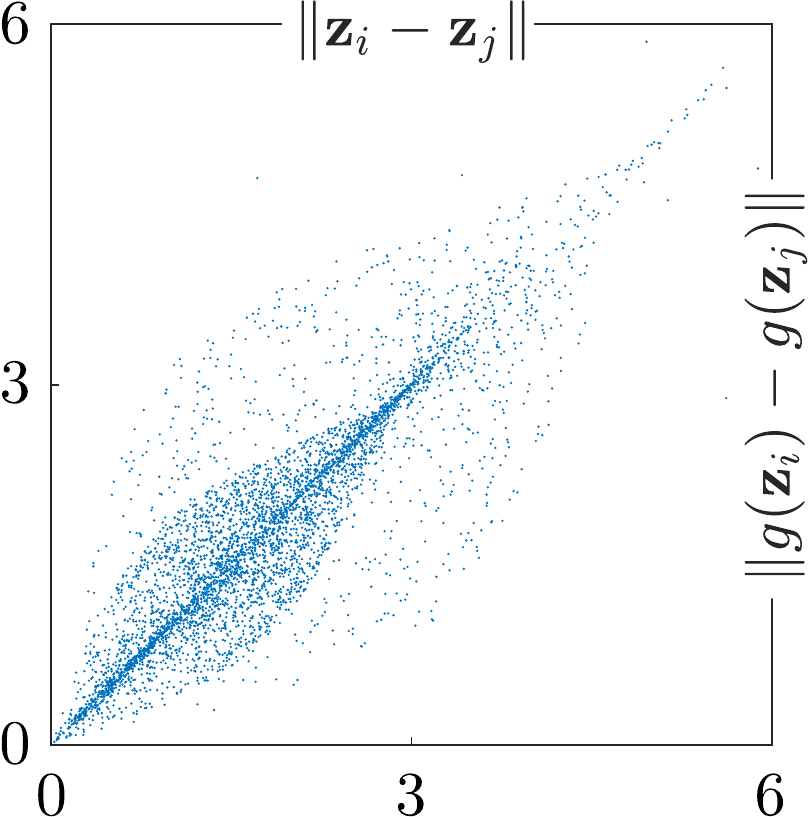}
    \end{center}
    \caption{Reparametrizations illustrated.  \emph{Left:} A
      ``swirling'' transformation of the latent space
      $\Z$. \emph{Right:} Pair wise distances between points before
      and after reparametrization; clearly the Euclidean distances
      change with reparametrizations.}
    \label{fig:swirl}    
  \end{figure}

  \textbf{Embedding methods} seek a low dimensional embedding
  $\z_{1:N} = \{\z_1, \ldots, \z_N\} \subset Z$ of the data $\x_{1:N}
  \subset \X$. These methods fundamentally only describe the data
  manifold at the points where data is observed and nowhere else.  As
  such, the low dimensional embedding space is only well defined at
  $\z_{1:N}$.  It is common to treat the low dimensional embedding
  space as equipped with the Euclidean metric, but this is generally a
  \emph{post hoc} assumption with limited grounding in the embedding
  method. Fundamentally, the learned representation space is a
  discrete space that does not lend itself to continuous
  interpolations.  Likewise, the most natural measure will only assign
  mass to the points $\z_{1:N}$, and any associated distribution will
  be discrete. It is not clear how this can naturally lead to an
  operational representation.
  
  \textbf{Generative models} estimate a set of low dimensional latent
  variables $\z_{1:N}$ along with a suitable mapping $f: \Z
  \rightarrow \X$ such that $f(\z) \approx \x$. It is, again, common
  to treat the latent space $\Z$ together with the Euclidean
  metric. However, this assumption is unwarranted. As an example,
  consider the \emph{variational autoencoder (VAE)}
  \cite{kingma:iclr:2014, rezende2014stochastic}, which seeks a
  representation in which $\z_{1:N}$ follow a unit Gaussian
  distribution.  Now consider a 2-dimensional latent space and the
  transformation $g(\z) = \mat{R}_{\theta} \z$, where
  $\mat{R}_{\theta}$ is a linear transformation that rotates points by
  $\theta(\z) = \sin(\pi \|\z\|)$. This is a smooth invertible
  transformation with the property that $\z \sim \N(\vec{0}, \mat{I})
  \Rightarrow g(\z) \sim \N(\vec{0}, \mat{I})$; see
  Fig.~\ref{fig:swirl}. If the latent variables $\z_{1:N}$ and the
  mapping $f$ is an optimal VAE, then $g(\z_{1:N})$ and $f \circ
  g^{-1}$ is equally optimal. Yet, the latent spaces $\Z$ and $g(\Z)$
  are quite different; Fig.~\ref{fig:swirl} shows the Euclidean
  distances between pairs of points of the latent space before and
  after applying $g$, for samples drawn from a unit Gaussian. Clearly,
  the transformed latent space is significantly different from the
  original space. As the VAE provides no guarantees as to which latent
  space is recovered, we must be careful when relying on the Euclidean
  latent space: distances between points are effectively arbitrary, as
  are straight line interpolations. Ideally, we want a representation
  that is invariant under such transformations, but current models do
  not have such properties.

  \textbf{In this paper}, we consider probabilistic latent variable
  models on the form $\x = f(\z)$ where $f$ is a smooth stochastic
  process. The latent space can then be endowed with a random
  Riemannian metric to ensure that the learned latent representation
  is operational as defined above. We consider a deterministic
  approximation to the random Riemannian metric, and provide tight
  approximation bounds for expected distances
  (\propref{prop:lengths}). The approximation is good when the data is
  high dimensional, which is often the case in machine learning
  applications. The analysis justifies the use of deterministic
  approximations, which in turn lead to computationally tractable
  algorithms.
  
  The paper is structured to first provide a short primer on (deterministic)
  Riemannian geometry (Sec.~\ref{sec:primer}). We then extend this class of geometries
  to the stochastic setting (Sec.~\ref{sec:stoch_riem_geom}), and provide our
  main theoretical contributions (Sec.~\ref{sec:rates}) that analyze to which extend
  stochastic manifolds are well approximated by deterministic ones. Our analysis
  holds for any smooth stochastic generative process, which we exemplify (Sec.~\ref{sec:empirical})
  with Gaussian process latent variable models \cite{L05}.
  
  \section{Riemannian manifolds}\label{sec:primer}
  A $d$-dimensional manifold $\M$ embedded in $\RR^n$ with $d < n$ is
  a topological space in which each point $x \in \M$ has a
  neighborhood that is homeomorphic to $\RR^d$
  \cite{gallot1990riemannian}. We may think of $\M$ as a smooth
  (nonlinear) surface in space that does not self intersect or change
  dimensionality. At each point $x \in \M \subseteq \RR^n$ we have the
  tangent space $T_x\M$ of $\M$ at $x$ which may be seen as a linear
  approximation of $\M$ near $x$. In Fig.~\ref{fig:manifold} a
  2-dimensional manifold embedded in $\RR^3$ is shown together with
  part of a tangent space.

  \begin{figure}[ht]
    \centering
    \begin{picture}(150,150)
    \put(0,0){\includegraphics[trim={4.5cm 4cm 1cm
            2.5cm},clip,scale=0.6]{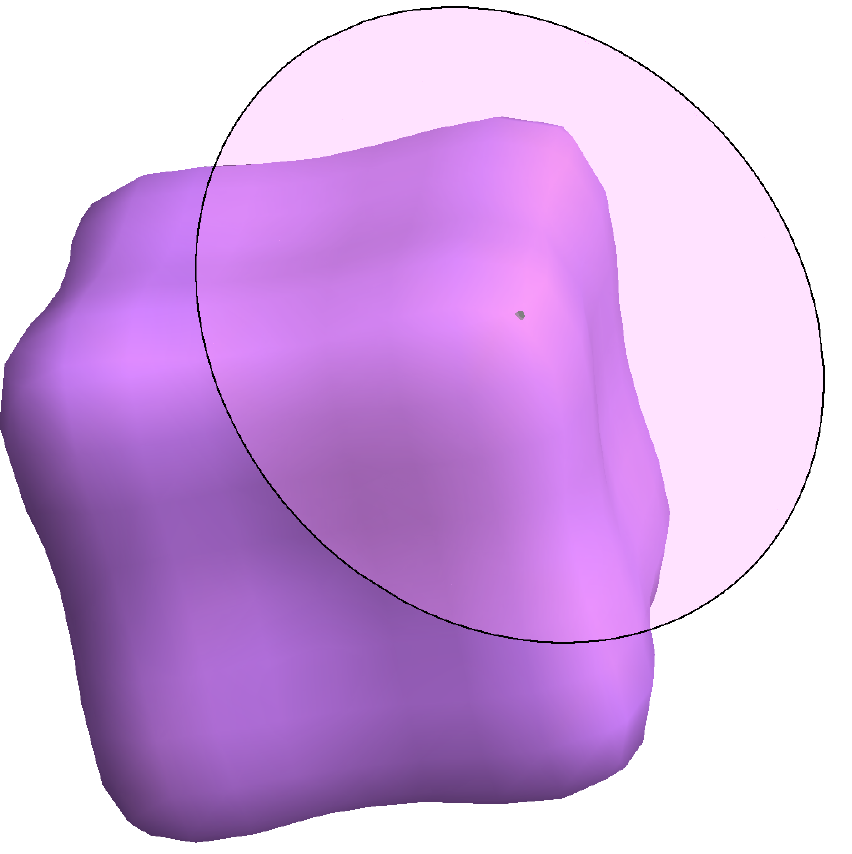}}
      \put(-10,40){$\M$}
      \put(85,85){$x$}
      \put(110,100){$T_x\M$}
    \end{picture}
    \caption{Image of an embedded manifold $\M$ and the tangent space
      $T_x\M$ at a point $x \in \M$.}
    \label{fig:manifold}
  \end{figure}

Let $f: \Z \to \M \subseteq \RR^n$ be a parametrization of an open
subset $f(\Z) \subseteq \M$ defined on some open subset $\Z \subseteq
\RR^d$. A Riemannian metric on $\Z$ is an inner product $\langle \cdot,
\cdot \rangle_z$ on the tangent spaces $T_z\Z \cong T_z\RR^d$ which
varies smoothly from point to point. Here, \emph{smooth} means
infinitely differentiable. Such a metric may be given by a positive
definite $(d\times d)$-matrix $M_z$ which depends smoothly on $z \in
\Z$. The induced inner product is then $\langle v,w \rangle_z=v^TM_zw$
for $v,w \in T_z\RR^d$ seen as column vectors.

Consider the standard inner product between points in $\RR^n$,
$\inner{\x}{\x'} = \sum_{i=1}^n x_i x_i'$ where $x=(x_1,\dots,x_n)$
and $x'=(x_1',\dots,x_n')$. Let $\z \in \Z$ and let $\Delta_1,
\Delta_2 \in U \subset \RR^d$ where $U$ is an open ball centered at
the origin such that $\z + U \subseteq Z$. Then we can compute the
inner product of $\Delta_1$ and $\Delta_2$ at $\z$ using the Taylor
expansion of $f$. Consider the scalar product $\inner{f(\z + \Delta_1)
  - f(\z)}{f(\z + \Delta_2) - f(\z)}$, and the linear part of it in
$\Delta_1$ and $\Delta_2$ which we denote by $\inner{f(\z + \Delta_1)
  - f(\z)}{f(\z + \Delta_2) - f(\z)}_0$.  Then
\begin{align*}
  & \inner{f(\z + \Delta_1) - f(\z)}{f(\z + \Delta_2) - f(\z)}_0= \\
    &\inner{f(\z) + \J_f(z) \Delta_1 - f(\z)}{f(\z) + \J_f(z)\Delta_2
      - f(\z)} = \\ &\inner{\J_f(z) \Delta_1}{\J_f(z)\Delta_2} =
    \Delta_1\T \xtx{\J_f(z)} \Delta_2,
  \end{align*}
where $\J_f(z)$ is the $n \times d$ Jacobian matrix of $f$ at $\z$.
The $d \times d$ symmetric positive definite matrix
$\mat{M}_{\z}=\xtx{\J_f(z)}$ defines a Riemannian metric on $\Z$
induced by $f$ which is called the \emph{pullback metric}. Note that
the pullback metric corresponds to the Riemannian metric on $\M$
induced by the inner product in the ambient space $\RR^n$, which does
not depend on the choice of parametrization $f$. In this way, the
pullback metric avoids the parametrization issue discussed in the
opening section.
  
  \textbf{Distances \& interpolants.} 
  The length of a smooth curve $\vec{c}: [a, b] \rightarrow \Z$
  under the local inner product is
  \begin{align*}
    \mathcal{L}(\vec{c})
       = \int_a^b \sqrt{\dot{\vec{c}}_t\T \mat{M}_{\vec{c}_t} \dot{\vec{c}}_t} \dif{t},
  \end{align*}
  where $\vec{c}_t=c(t)$ and $\dot{\vec{c}}_t = \partial_t \vec{c}(t)$
  is the curve and its derivative, respectively. Natural interpolants
  (geodesics) can then be defined as length minimizing curves
  connecting two points. The length of such a curve is a natural
  distance measure along the manifold.  Unfortunately, minimizing
  curve length gives rise to a poorly determined optimization problem
  as the length of a curve is independent of the parametrization.  The
  following proposition provides remedy \cite{gallot1990riemannian}:
  \begin{proposition}\label{prop:energy}
    Let $\vec{c}: [a, b] \rightarrow \Z$ be a smooth curve that
    (locally) minimizes the curve energy
    \begin{align}
      \mathcal{E}(\vec{c}) &= \frac{1}{2} \int_a^b \dot{\vec{c}}_t\T \mat{M}_{\vec{c}_t} \dot{\vec{c}}_t \dif{t}.
      \label{eq:energy}
    \end{align}
    Then $\vec{c}$ has constant velocity and is locally
    length-minimizing.
  \end{proposition}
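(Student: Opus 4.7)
The plan is to prove both conclusions via a single Cauchy--Schwarz relation between length and energy, augmented by the Euler--Lagrange equation for energy.

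First I would record the fundamental inequality. For any smooth curve $\gamma:[a,b]\to\Z$, writing $v_t = \sqrt{\dot\gamma_t\T M_{\gamma_t}\dot\gamma_t}$, Cauchy--Schwarz applied to $\int_a^b 1\cdot v_t\,\dif{t}$ gives
\begin{equation*}
\mathcal{L}(\gamma)^{2} \;=\; \Bigl(\int_a^b v_t\,\dif{t}\Bigr)^{2} \;\le\; (b-a)\int_a^b v_t^{2}\,\dif{t} \;=\; 2(b-a)\,\mathcal{E}(\gamma),
\end{equation*}
with equality if and only if $v_t$ is constant in $t$, i.e.\ $\gamma$ has constant velocity. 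This is reparametrization invariant on the length side but not on the energy side, and this asymmetry will do all the work.

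Next, to get constant velocity I would use the Euler--Lagrange equation associated to $\mathcal{E}$. Since $c$ locally minimizes $\mathcal{E}$ it is a critical point, and the Lagrangian $L(z,\dot z)=\tfrac12 \dot z\T M_z \dot z$ yields the geodesic equation $\nabla_{\dot c}\dot c=0$ for the Levi--Civita connection of $M$. Metric compatibility of this connection then gives
\begin{equation*}
\tfrac{d}{dt}\bigl(\dot c_t\T M_{c_t}\dot c_t\bigr) \;=\; 2\,\langle\nabla_{\dot c}\dot c,\dot c\rangle_{c_t} \;=\; 0,
\end{equation*}
so $\|\dot c_t\|_{M}$ is constant. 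An alternative route that avoids introducing the connection is to vary $c$ by a one-parameter family of reparametrizations $c_s = c\circ \phi_s$, compute $\tfrac{d}{ds}\mathcal{E}(c_s)|_{s=0}$, and observe that criticality against all such reparametrizations forces $v_t$ to be constant; I would pick whichever variant fits the surrounding exposition.

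For local length-minimality, I would compare $c$ to a nearby smooth curve $\gamma:[a,b]\to\Z$ joining the same endpoints and then reparametrize $\gamma$ to constant speed on $[a,b]$, obtaining $\tilde\gamma$ with $\mathcal{L}(\tilde\gamma)=\mathcal{L}(\gamma)$ and, by the equality case above, $\mathcal{E}(\tilde\gamma)=\mathcal{L}(\gamma)^2/(2(b-a))$. Since $c$ itself has constant speed, $\mathcal{E}(c)=\mathcal{L}(c)^2/(2(b-a))$. Provided $\tilde\gamma$ still lies in the neighborhood of $c$ on which $c$ minimizes energy, the inequality $\mathcal{E}(\tilde\gamma)\ge\mathcal{E}(c)$ rearranges to $\mathcal{L}(\gamma)\ge\mathcal{L}(c)$.

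The main obstacle is precisely this last point: a reparametrization can move a curve substantially in $C^{k}$ norms, so one must choose the topology carefully so that both $c$ being a local energy-minimizer and $\tilde\gamma$ being close to $c$ are compatible. I would handle this either by working in the $W^{1,2}$ topology (where reparametrization to constant speed depends continuously on the curve near a constant-speed $c$), or by restricting attention to a tubular neighborhood of the image of $c$ in which all competing curves can be reparametrized to constant speed while remaining inside the neighborhood on which local minimality holds. The rest of the argument is routine Cauchy--Schwarz bookkeeping.
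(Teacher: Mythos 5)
The paper offers no proof of this proposition: it is quoted as a standard fact with a citation to \cite{gallot1990riemannian}, so there is no in-paper argument to compare against. Your proposal is the standard textbook proof (the Cauchy--Schwarz inequality $\mathcal{L}(\gamma)^2 \le 2(b-a)\mathcal{E}(\gamma)$ with equality exactly for constant-speed curves, Euler--Lagrange plus metric compatibility for constant speed, and the constant-speed reparametrization trick for length-minimality); it is essentially correct, and you rightly flag the one genuine subtlety --- that the reparametrized competitor $\tilde\gamma$ must remain in the neighborhood on which $c$ minimizes energy --- and offer an adequate fix via the choice of topology or a tubular neighborhood of the image of $c$.
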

  
  \textbf{Integration.}  Given a function $h: \RR^n \rightarrow \RR$ and
  an open subset $\Omega \subseteq \Z$ we can integrate $h$ over
  $f(\Omega)$ as \cite{Pennec:JMIV:06}
  \begin{align*}
    \int_{f(\Omega)} h(\x) \dif{\x}
      &= \int_{\Omega} h(f(\z)) \sqrt{\det(\mat{M}_{\z})} \dif{\z}.
  \end{align*}
  The quantity $\sqrt{\det(\mat{M_z})}$ is known as the
  \emph{Riemannian volume measure} and is akin to the
  Jacobian determinant in the \emph{change of variables theorem}.

  \textbf{Relation to latent variable models.}  As stated in the
  introduction, we are concerned with latent variable models $\x =
  f(\z)$, where $f$ is a smooth stochastic process. The above
  constructions assume that $f$ is deterministic. In this case,
  Riemannian geometry provides us with the tools to make the
  representation \emph{operational} as defined in the
  introduction. This paper is concerned with extending Riemannian
  geometry to the stochastic domain in order to provide operational
  representations in latent variable models. To make the resulting
  constructions practical, we will further show how the stochastic
  geometry can be approximated well by deterministic geometries that
  lend themselves to computations.

\section{Stochastic Riemannian geometry}\label{sec:stoch_riem_geom}

In this section we establish the basic definitions in the area of
random manifolds. To the best of our knowledge this is a fairly
unexplored topic and in our opinion it deserves more
attention. Related work has been done in the area of random fields,
see for instance \cite{AT07}. To illustrate the concepts the
definitions are followed by some elementary examples.

We start by defining random metrics.

\begin{definition}
  Let $\Z \subseteq \RR^d$ be an open subset. A random (or stochastic)
  Riemannian metric is a matrix-valued random field on $\Z$ whose
  sample paths are Riemannian metrics. We also refer to $\Z$ equipped
  with the random metric as a random manifold.
\end{definition}

The stochastic Riemannian metrics considered in this paper are induced
by a stochastic process $f:\Z \to \RR^n$ for some $n$, where $\Z
\subseteq \RR^d$ is an open subset. More precisely, $f$ is a random
field in the sense of \cite{AT07}. In our examples we will have
$\Z=\RR^d$ but it is useful to keep in mind that the parametrization
may not be defined on the whole of $\RR^d$. Now, in order for $f$ to
induce a random metric it must satisfy some differentiability
conditions. One option is to require that any sample path from $f$ is
smooth. In that case we say that $f$ is smooth. If in addition the
sample paths $s:\Z \to \RR^n$ are such that the Jacobian matrix $J_s$
has full rank everywhere, $f$ is called a \emph{stochastic
  immersion}. This implies that $s$ is locally injective and that
$s(\Z) \subseteq \RR^n$ is an immersed submanifold. A stochastic
immersion whose sample paths are injective is called a
\emph{stochastic embedding}. In this case the sample paths $s:\Z \to
\RR^n$ are embedded submanifolds $s(\Z) \subseteq \RR^n$. As such,
$s(\Z)$ has an induced Riemannian metric from the ambient space
$\RR^n$ and this defines a stochastic Riemannian metric on
$\Z$. Another point of view of the basic objects is thus as random
embedded submanifolds of $\RR^n$. These submanifolds all have the same
constant topology and smooth structure induced by $\Z$ and hence from
an intrinsic point of view, the random aspect of these manifolds is
confined to the Riemannian metric. If the conditions to be a
stochastic immersion or embedding are only true with probability 1 we
can modify the process by restricting to the measure 1 subset of the
probability space where the requirements are fulfilled.

As in the deterministic case, the metric on $\Z$ induced by $f$ is
given by the pullback metric
\begin{align*}
  M=J_f^TJ_f,
\end{align*}
where $J_f$ is the Jacobian of $f$. In accordance with the definition
of stochastic Riemannian metrics, $M$ is a matrix-valued stochastic
process parametrized by $\Z$.

If it exists, we may also consider the \emph{expected metric} on
$\Z$ which is given by the mean value $\EE(M)=\EE(J_f^TJ_f)$.
\begin{definition}
  Let $f:\Z \to \RR^n$ be a stochastic immersion such that all the
  entries of the matrix $\EE(M)=\EE(J_f^TJ_f)$ are finite smooth
  functions. We then refer to $\EE(M)$ as the expected metric. It
  defines a Riemannian metric on $\Z$ making it into a Riemannian
  manifold which is called the mean manifold.
\end{definition}
Note that the mean manifold is typically \emph{not} given by the mean value
of $f$, that is the map $\EE(f):\Z \to \RR^n: z \mapsto
\EE(f(z))$. The expected metric has been previously studied for the
\emph{Gaussian process latent variable model} \cite{LHTV14}, and the
\emph{variational autoencoder} \cite{AHH18}.

\begin{remark}
  The requirement that a stochastic process $f:\Z \to \RR^n$ has
  smooth sample paths with probability 1 is not always straightforward
  to verify. An alternative assumption that is relatively easy to
  check is that $f$ is differentiable in mean square, see
  \cite{AT07}. We say that $f$ is mean square smooth if $f$ has mean
  square derivatives of any order. Let $J_f$ denote the mean square
  Jacobian of a mean square smooth process with smooth covariance
  function. The expected metric may be considered in this setting as
  well, assuming that $\EE(M)=\EE(J_f^TJ_f)$ has full rank. The main
  result of this paper, \propref{prop:lengths}, compares expected
  length on random manifolds to length in the expected metric. This
  result holds both in the smooth and mean square smooth setting.
\end{remark}

\begin{example} \label{ex:gaussian}
A significant special case is a Gaussian process or Gaussian random
field
\begin{align*}
  f:\RR^d \to \RR^n: q \mapsto (f_1(q),\dots,f_n(q)),
\end{align*}
with $d \leq n$. Here, the components $f_i:\RR^d \to \RR$ are Gaussian
processes, which means that the vector $(f_i(q_1),\dots,f_i(q_r))$ is
Gaussian for any $q_1,\dots,q_r \in \RR^d$. In fact, that $f:\RR^d \to
\RR^n$ is a Gaussian process means that $\sum_{i=1}^n \alpha_if_i$ is
a Gaussian process for all $\alpha_1,\dots,\alpha_n \in \RR$. We will
be concerned with the case where the random vectors
$(f_i(q_1),\dots,f_i(q_r))$ and $(f_j(q_1),\dots,f_j(q_r))$ are
independent for $i \neq j$ and all $q_1,\dots,q_r \in \RR^d$. The
distribution of $(f_i(q_1),\dots,f_i(q_r))$ is determined by the mean
function $\mu_i:\RR^d \to \RR: p \mapsto \EE(f_i(p))$ and the
covariance function $k_i: \RR^d \times \RR^d \to \RR: (p,q) \mapsto
\EE((f_i(p)-\mu_i(p))(f_i(q)-\mu_i(q)))$. If $k_i$ and $\mu_i$ are
real analytic functions, then $f_i$ is mean square smooth. This can be
seen from the criterion (1.4.9) in \cite{AT07}. Moreover, for any
column vector $v \in \RR^d$,
\begin{align*}
  v^T\EE(J_f^TJ_f)v=\EE(v^TJ_f^TJ_fv)=\EE(||J_fv||^2)=\sum_{i=1}^n\EE(\nabla
  f_i \cdot v)^2 + \sum_{i=1}^n \var{\nabla f_i \cdot v},
\end{align*}
and $\EE(\nabla f_i \cdot v)=\nabla \mu_i \cdot v$. Hence $\EE(J_f^TJ_f)$
has full rank for example if $\mu=(\mu_1,\dots,\mu_n)$ is locally
injective or $\nabla f_i$ has non-degenerate covariance matrix for
some $i$.

For Gaussian processes in machine learning, see \cite{L05, RW06,
  LHTV14} and in particular the Gaussian process latent variable
model.
\end{example}

\begin{example} \label{ex:randomproj}
Another special case to keep in mind which is particularly simple is a
stochastic embedding $f(x)=L(g(x))$, where $g:\RR^d \to \RR^m$ a
deterministic smooth injection and $L:\RR^m \to \RR^n$ is a random
matrix with $n \leq m$. Consider the case where $L$ is given by a
distribution on $\SO{\RR^m}{\RR^n}$, that is any sample of $L$ has
orthonormal rows. Let $W$ be the span of the rows of $L$ and fix
coordinates on $W$ given by the rows of $L$ as a basis. In this way,
$f$ can be seen as a random projection of the deterministic manifold
$g(\RR^d) \subseteq \RR^m$ into $W$. Note that if $n<m$, this can be
viewed as random dimensionality reduction. See \cite{B06} for a survey
on the topic of random projections in machine learning. In this
context it is relevant to recall the Johnson-Lindenstrauss
lemma. Roughly speaking, given a finite set $E \subset \RR^m$,
distances in $E$ are well preserved by a random projection $\RR^m \to
\RR^n$ if $n$ is big enough. See for example \cite{DG03} for
background on the Johnson-Lindenstrauss lemma. In the same spirit one
can show that distances along the submanifold $g(\RR^d) \subseteq
\RR^m$ are well preserved under similar circumstances \cite{BW09,
  C08, DFMV08, BHW08, GGL16, V11}.
\end{example}

\begin{example} \label{ex:simple}
Consider a simple random manifold given by a stochastic embedding
\begin{align*}
  f:\RR^d \to \RR^n:z
  \mapsto (f_1(z),\dots,f_n(z)),
\end{align*}
where $f_i=\mu_i + \epsilon_i\sigma_i$ for smooth functions
$\mu_i,\sigma_i:\RR^d \to \RR$ and random variables $\epsilon_i$ with
$\EE(\epsilon_i)=0$ and $\EE(\epsilon_i^2)=1$. Let $\mu:\RR^d \to
\RR^n:z \mapsto (\mu_1(z),\dots,\mu_n(z))$, $\sigma:\RR^d \to
\RR^n:z\mapsto (\sigma_1(z),\dots,\sigma_n(z))$ and define
\begin{align*}
e=
\begin{pmatrix}
  \epsilon_1 & 0 & 0 & \ldots & 0\\
  0 & \epsilon_2 & 0 & \ldots & 0\\
  &&\vdots&\\
  0 & 0 & 0 & \ldots & \epsilon_n
  \end{pmatrix}.
\end{align*}
Then $f=\mu + e \sigma$, where the product $e\sigma$ is matrix
multiplication and $\mu$ and $\sigma$ are viewed as column
vectors. Note that $\EE(e)=0$ and $\EE(e^2)=\id{\RR^n}$, that is
$\EE(e)$ is the zero matrix and $\EE(e^2)$ the identity matrix on
$\RR^n$. For the Jacobian of $f$ we get $J_f=J_{\mu}+eJ_{\sigma}$ and
hence
\begin{align*}
M=J_f^TJ_f =
(J_{\mu}+eJ_{\sigma})^T(J_{\mu}+eJ_{\sigma})=J_{\mu}^TJ_{\mu}+J_{\mu}^TeJ_{\sigma}+J_{\sigma}^TeJ_{\mu}+J_{\sigma}^Te^2J_{\sigma}.
\end{align*}
Therefore
$\EE(J_f^TJ_f)=J_{\mu}^TJ_{\mu}+J_{\mu}^T\EE(e)J_{\sigma}+J_{\sigma}^T\EE(e)J_{\mu}+J_{\sigma}^T\EE(e^2)J_{\sigma}=J_{\mu}^TJ_{\mu}+J_{\sigma}^TJ_{\sigma}$. This
is also the metric on $\RR^d$ induced by the embedding $g:\RR^d \to
\RR^{2n}=\RR^n \times \RR^n$ given by 
\begin{align*}
  g:\RR^d \to \RR^n \times
  \RR^n:z \mapsto (\mu(z),\sigma(z)).
\end{align*}
The metric induced by $g$ has been studied
empirically for variational autoencoders \cite{AHH18}. We see in this
example how the mean manifold $g(\RR^d) \subseteq \RR^{2n}$ depends on
both the mean value $\mu=\EE(f)$ and the standard deviation map
$\sigma$.

Another point of view is that $f = L \circ g$ where
\begin{align*}
  L=(\id{\RR^n} |\; e)
\end{align*}
is the random $(n \times 2n)$-matrix given by stacking the columns of
the matrices $\id{\RR^n}$ and $e$ side by side. This is a special case
of the random manifolds considered in \exref{ex:randomproj}. In
connection with Ex.~\ref{ex:decomp} below we note that
$\EE(L^TL)=\id{\RR^{2n}}$.
\end{example}

\begin{example} \label{ex:decomp}
Recall \exref{ex:randomproj} and random manifolds given by $f=L \circ
g$ where $g: \RR^d \to \RR^m$ is an embedding and $L:\RR^m \to \RR^n$
is a random matrix. Assume that $\EE(L^TL)$ has finite entries and
full rank. We shall see that, similarly to \exref{ex:simple}, the mean
manifold is essentially the manifold $g(\RR^d) \subseteq \RR^m$. Since
$\EE(L^TL)$ is symmetric we may diagonalize $\EE(L^TL)$ with an
orthogonal $(m\times m)$-matrix $P$, that is $\EE(L^TL)=P^TDP$ where $D$
is diagonal. Since $\EE(L^TL)$ is positive definite we can take the
square roots of the eigenvalues on the diagonal of $D$ to obtain a
diagonal matrix $\sqrt{D}$. In this case the stochastic metric is
$M=J_f^TJ_f=J_g^TL^TLJ_g$ and the mean metric is
$\EE(M)=J_g^T\EE(L^TL)J_g$. Hence the mean manifold is induced by the
embedding $\RR^d \to \RR^m:z\mapsto \sqrt{D}Pg(z)$, which is just the
manifold $g(\RR^d) \subseteq \RR^m$ up to a change of coordinates. In
contrast consider $\EE(f):\RR^d \to \RR^n$, and note that
$\EE(f)=\EE(L) \circ g$. The image of $\EE(f)$ is thus the image of
the embedded mean manifold $\sqrt{D}Pg(\RR^d) \subseteq \RR^m$ under
the linear map $\EE(L)(\sqrt{D}P)^{-1}:\RR^m \to \RR^n$.
\end{example}

\subsection{Expected length versus the expected metric} \label{sec:versus}
Let $\Z \subseteq \RR^d$ be an open subset. We will now explore the
topic of shortest paths on a random manifold induced by a stochastic
immersion $f:\Z \to \RR^n$ with expected metric $\EE(M)$. In order to
talk about expected length of curves on the random manifold we need
that $f$ is measurable when viewed as a map $\Omega \times Z \to
\RR^n$, where $\Omega$ is a probability space. Here, $Z$ and $\RR^n$
are endowed with the Lebesgue measure and $\Omega \times Z$ with the
product measure. Now let $\vec{c}: [a, b] \rightarrow \Z$ denote a
smooth immersed curve and consider its stochastic immersion $f \circ
c$ in $\RR^n$. We stress that $\vec{c}$ is a deterministic curve in
$\Z$, while $f \circ c $ is a random curve in $\RR^n$. The energy of
$c$, defined as in Eq.~\ref{eq:energy}, is a random quantity and it is
natural to consider its expectation with respect to the random metric.
Since the energy integrand is positive, Tonelli's Theorem tells us
that the expected energy $\epsilon(\vec{c})$ is given by
  \begin{align*}
    \epsilon(\vec{c})
    &= \E(\mathcal{E}(c))
       = \frac{1}{2} \E\left(\int_a^b \dot{\vec{c}}_t\T \mat{M}_{\vec{c}_t} \dot{\vec{c}}_t \dif{t}\right) \\ 
      &= \frac{1}{2} \int_a^b \dot{\vec{c}}_t\T \E(\mat{M}_{\vec{c}_t}) \dot{\vec{c}}_t \dif{t}.
  \end{align*}
  This implies that a curve $\vec{c}$ with minimal expected energy
  over the stochastic manifold is a geodesic under the deterministic
  Riemannian metric $\E( \mat{M} )$.
  
  We can understand a curve with minimal expected energy in more
  explicit terms as follows. Let $u_t = \E(\| \dot{\vec{c}}_t \|)$ and
  $v_t = 1$ denote two functions over the interval $[a, b]$; here we
  use the shorthand notation $\| \dot{\vec{c}}_t \| =
  \sqrt{\dot{\vec{c}}_t\T \mat{M}_{\vec{c}_t} \dot{\vec{c}}_t}$.  The
  Cauchy-Schwartz inequality then gives
  \begin{align*}
    \left( \int_a^b \E(\| \dot{\vec{c}}_t \|) \dif{t} \right)^2
      &\leq  \int_a^b \E(\| \dot{\vec{c}}_t \|)^2 \dif{t} \cdot \int_a^b \dif{t} \\
      &= (b-a) \int_a^b \E(\| \dot{\vec{c}}_t \|)^2 \dif{t}.
  \end{align*}
  Let $l(\vec{c})\! = \! \E(\mathcal{L}(\vec{c}))$ denote the expected
  length of $\vec{c}$. Then
  \begin{align} \label{eq:CS}
    \int_a^b \E(\| \dot{\vec{c}}_t \|)^2 \dif{t}
      &\geq 
    \frac{l^2(\vec{c})}{b-a}.
  \end{align}
  Equality is achieved when $u_t$ and $v_t$ are parallel, that is when
  $\E(\| \dot{\vec{c}}_t \|)$ is constant. If the curve is
  \emph{regular} in the sense that the expected speed $\E(\|
  \dot{\vec{c}}_t \|)$ is non-zero for all $t$, then we can always
  reparametrize $\vec{c}_t$ to have constant expected speed and
  achieve equality in Eq.~\ref{eq:CS}. Since $\var{\| \dot{\vec{c}}_t
    \|} = \E(\| \dot{\vec{c}}_t \|^2) - \E(\| \dot{\vec{c}}_t \|)^2$,
  we see that
  \begin{align*}
    \int_a^b \!\E(\| \dot{\vec{c}}_t \|)^2 \dif{t}
      &= \int_a^b \! \E(\| \dot{\vec{c}}_t \|^2) \dif{t}
       - \int_a^b \var{\| \dot{\vec{c}}_t \|} \dif{t} \\
      &= 2\epsilon(\vec{c})
       - \int_a^b \var{\| \dot{\vec{c}}_t \|} \dif{t}.
  \end{align*}
  Assuming that the curve has constant expected speed, we then get
  \begin{align*}
    \epsilon(\vec{c})
      &= 
    \frac{l^2(\vec{c})}{2(b-a)} +
    \frac{1}{2}\int_a^b \var{\| \dot{\vec{c}}_t \|} \dif{t}.
  \end{align*}
  Minimizing expected curve energy, thus, does not always minimize the expected
  curve length. Rather, this balances the minimization of expected curve length
  and the minimization of curve variance.
  
\section{Expected length in high codimension} \label{sec:rates}

Let $\Z \subseteq \RR^d$ be an open subset and $f: \Z \to \RR^n$ a
stochastic immersion with expected metric $\EE(M)$. Any smooth curve
$c:[0,1] \to \Z$ gives rise to a random curve $f \circ c:[0,1] \to
\RR^n$. In this section we continue to examine the relationship
between the expected length of random curves and the length with
respect to the expected metric. More precisely, we show that length
with respect to the expected metric is a good approximation for
expected length in high ambient dimension $n$. Assuming independence
of the components of $f \circ c$ we could apply a version of the
central limit theorem such as the Berry-Esseen theorem to this
problem, see for example \cite{E69}. We found it more convenient to
take a direct approach via the Taylor expansion of the norm of
velocity vectors. See also \cite{K04, BM13} and Chapter 27 of
\cite{C62} for approximation results in the same vein.

\subsection{Expected norm of high dimensional vectors}

We will consider a sequence $W_n$ of random vectors in $\RR^n$. This
means that for each integer $1 \leq n$ we have an $\RR^n$-valued
random variable $W_n$. Consider the norm $w_n = ||W_n||$. Assume that
$w_n^2$ has first and second moments and put $m_n = \sqrt{\EE(w_n^2)}$
and $\Sigma_n = \sqrt{n\cdot \var{w_n^2}}$. We say that $m_n$ is
bounded away from 0 if there is a constant $0<b$ such that $b<m_n$ for
all $n$. For $1 \leq k$, let $\mu_k(w_n^2)=\EE((w_n^2-m_n^2)^k)$
denote the $k$-th central moment of $w_n^2$.

\begin{definition}
  We call a sequence $W_n$ as above balanced if $m_n$ is bounded away
  from 0 and $m_n$, $n^2 \mu_3(w_n^2)$ and $n^2 \mu_4(w_n^2)$ are
  bounded sequences.
\end{definition}

Suppose that $W_n$ is balanced. We shall see that $\Sigma_n$ is a
bounded sequence. Let $Z_n=n(w_n^2-m_n^2)^2$ and let $\mathcal{X}_n$
be the indicator function for the event $\{Z_n < 1\}$. We know that
$\EE(Z_n^2)=n^2\mu_4(w_n^2)$ is bounded and need to show that
$n\mu_2(w_n^2)=\EE(Z_n)$ is bounded. This follows from $\EE(Z_n) \leq
1+\EE(Z_n(1-\mathcal{X}_n)) \leq 1 + \EE(Z_n^2)$.

\begin{remark} \label{rem:sqrt}
Consider the cubic Taylor polynomial of $\sqrt{x}$ around $x=1$,
$P(x)=1+\frac{1}{2}(x-1)-\frac{1}{8}(x-1)^2+\frac{1}{16}(x-1)^3$. We
will show that $|\sqrt{x}-P(x)| \leq \frac{5}{16}(x-1)^4$ for all $x
\geq 0$. Let $y=\sqrt{x}$ and put $Q(y)=P(y^2)-y$. Since
$16Q(y)=(y-1)^4(y^2+4y+5)$, we have that $Q(y) \geq 0$ for $y \geq
0$. Hence $P(x) - \sqrt{x} \geq 0$ for $x \geq 0$. Let
$R(y)=\frac{5}{16}(y^2-1)^4-P(y^2)+y$. Since
$16R(y)=y(y-1)^4(5y^3+20y^2+29y+16) \geq 0$ for $y \geq 0$, we have
that $\frac{5}{16}(x-1)^4 \geq P(x) - \sqrt{x}$ for $x \geq 0$.
\end{remark}

\begin{proposition} \label{prop:estimate}
Let $W_n$ be a balanced sequence of vectors. Then
\begin{align*}
  \EE(w_n) = m_n - \frac{\Sigma_n^2}{8n\cdot m_n^3} +
  \mathcal{O}(n^{-2}).
\end{align*}
\end{proposition}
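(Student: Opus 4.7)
The plan is to apply the cubic Taylor expansion of $\sqrt{x}$ around $x=1$ set up in Remark~\ref{rem:sqrt} to the rescaled squared norm. Concretely, set $X_n = w_n^2/m_n^2$, so that $\EE(X_n)=1$, $w_n = m_n\sqrt{X_n}$, and $X_n \geq 0$ almost surely. Then the remark gives pointwise
\begin{align*}
  \left|\sqrt{X_n} - P(X_n)\right| \;\leq\; \frac{5}{16}(X_n-1)^4,
\end{align*}
with $P(x) = 1 + \tfrac{1}{2}(x-1) - \tfrac{1}{8}(x-1)^2 + \tfrac{1}{16}(x-1)^3$. Taking expectations and using linearity of expectation reduces the problem to controlling the moments $\EE((X_n-1)^k)$ for $k=2,3,4$.

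Next I would compute these moments directly in terms of the central moments of $w_n^2$. Since $\EE(X_n)=1$, the linear term vanishes. The quadratic term produces $\EE((X_n-1)^2) = \var{w_n^2}/m_n^4 = \Sigma_n^2/(n m_n^4)$, so after multiplying the whole expansion by $m_n$ it contributes exactly the advertised correction $-\Sigma_n^2/(8 n m_n^3)$. The cubic term contributes $m_n\mu_3(w_n^2)/(16 m_n^6) = \mu_3(w_n^2)/(16 m_n^5)$, and the Taylor remainder is bounded in expectation by $5 m_n \mu_4(w_n^2)/(16 m_n^8) = 5\mu_4(w_n^2)/(16 m_n^7)$.

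It remains to absorb the cubic term and the remainder into $\mathcal{O}(n^{-2})$. Here the balanced hypothesis does exactly what is needed: $n^2\mu_3(w_n^2)$ and $n^2\mu_4(w_n^2)$ are bounded sequences, and $m_n$ is bounded above (because $m_n^2 \leq m_n^2 + n\mu_2(w_n^2)/n$ and one can also note that $\mu_2$-boundedness together with the $\mu_4$-bound controls $m_n$ from above, or more directly $m_n$ is bounded since we only use that $m_n$ stays away from $0$ and $\infty$; the upper bound on $m_n$ follows from the hypothesis that $m_n$ itself is a bounded sequence as stated in the definition). Consequently $m_n^{-5}$ and $m_n^{-7}$ are bounded, so both $\mu_3(w_n^2)/(16 m_n^5)$ and $5\mu_4(w_n^2)/(16 m_n^7)$ are $\mathcal{O}(n^{-2})$, yielding the claimed asymptotic.

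The only subtle step is the uniform remainder bound: it is essential that the inequality $|\sqrt{x}-P(x)|\leq \tfrac{5}{16}(x-1)^4$ holds for \emph{all} $x\geq 0$ rather than only locally around $x=1$, since $X_n$ can in principle take arbitrarily large values; this is precisely what Remark~\ref{rem:sqrt} establishes, so integrating it against the law of $X_n$ causes no difficulty. Everything else is bookkeeping with central moments and the definition of a balanced sequence.
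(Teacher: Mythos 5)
Your proof is correct and follows essentially the same route as the paper: normalize by $m_n$ (the paper works with $W_n/m_n$, you with $X_n = w_n^2/m_n^2$, which is the same thing), apply the cubic Taylor polynomial of $\sqrt{x}$ together with the quartic remainder bound from Remark~\ref{rem:sqrt}, and use balancedness to absorb the $\mu_3$ and $\mu_4$ contributions into $\mathcal{O}(n^{-2})$. The only (harmless) slip is in your parenthetical: the boundedness of $m_n^{-5}$ and $m_n^{-7}$ comes from $m_n$ being bounded \emph{away from zero}, not from $m_n$ being bounded above; both properties are part of the definition of a balanced sequence, so nothing is actually missing.
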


\begin{proof}
We will first prove the statement assuming that $m_n=1$ for all
$n$. In this case we need to show that
$\EE(w_n)-(1-\Sigma_n^2/8n)=\mathcal{O}(n^{-2})$. Let
$P(x)=1+\frac{1}{2}(x-1)-\frac{1}{8}(x-1)^2+\frac{1}{16}(x-1)^3$ be
the cubic Taylor polynomial of $\sqrt{x}$ around $x=1$ and note that
$\EE(P(w_n^2))=1-\Sigma_n^2/8n+\frac{1}{16}\mu_3(w_n^2)$. Since by
assumption $\mu_3(w_n^2) = \mathcal{O}(n^{-2})$, it is enough to show
that $\EE(w_n)-\EE(P(w_n^2))=\mathcal{O}(n^{-2})$. Note that
\begin{align*}
  |\EE(w_n)-\EE(P(w_n^2))| =|\EE(w_n-P(w_n^2))| \leq \EE(|w_n-P(w_n^2)|).
\end{align*}
Also, $|\sqrt{x}-P(x)| \leq \frac{5}{16}(x-1)^4$ for all $x \geq 0$ by
\remref{rem:sqrt}. Hence $|w_n-P(w_n^2)| \leq (w_n^2-1)^4$ and by
assumption we have $\EE((w_n^2-1)^4)=\mathcal{O}(n^{-2})$. It follows
that $\EE(|w_n-P(w_n^2)|) \leq \EE((w_n^2-1)^4)= \mathcal{O}(n^{-2})$.

Now consider the general case of a balanced sequence $W_n$ and let
$m_n$ and $\Sigma_n$ denote the corresponding sequences associated to
$W_n$. Since $W_n/m_n$ is balanced, $\EE(||W_n/m_n||^2)=1$ for all $n$
and $n \cdot \var{||W_n/m_n||^2}=\Sigma_n^2/m_n^4$, we have by above
that
\begin{align*}
  \EE(w_n/m_n)=1-\frac{\Sigma_n^2/m_n^4}{8n}+\mathcal{O}(n^{-2}).
\end{align*}
But $m_n$ is bounded and so the claim follows by multiplying by $m_n$.
\end{proof}

\begin{remark} \label{rem:estimate}
  Suppose that $W_n$ is balanced. Since $\Sigma_n$ is bounded and
  $m_n$ bounded away from 0 we have that $\limsup_{n \to \infty}
  \Sigma_n < \infty$ and $\liminf_{n \to \infty} m_n > 0$. Let $A,b
  \in \RR$ be such that $A > \limsup_{n \to \infty} \Sigma_n$ and $0 <
  b < \liminf_{n \to \infty} m_n$. Then, by \propref{prop:estimate},
  \begin{equation} \label{eq:estimate}
  0 \leq m_n - \EE(w_n) \leq
  \frac{A^2}{8n \cdot b^3},
\end{equation}
  for large enough $n$. In particular, if $\Sigma_n \to \Sigma$ and
  $m_n \to m$ with $\Sigma, m \in \RR$, then Eq.~\ref{eq:estimate}
  holds for any $A > \Sigma$ and $0 < b < m$. In this case we also
  have that
  \begin{align} \label{eq:limit}
    \lim_{n \to \infty} \EE(w_n)=\lim_{n
    \to \infty} m_n
  \end{align}
  and \propref{prop:estimate} gives some
  additional information concerning the rate of convergence. Note also
  that $\var{w_n}=m_n^2-\EE(w_n)^2 \to 0$ for balanced $W_n$.
\end{remark}

\subsection{Normed sequences of independent random variables}

For $k\geq 1$ and a random variable $Y$ with first moment $\EE(Y)$,
let $\mu_k(Y)=\EE((Y-\EE(Y))^k)$ denote the $k$-th central moment. Now
consider a sequence $X_1,X_2,\dots$ of independent random variables
and let \[W_n = (X_1/\sqrt{n},\dots,X_n/\sqrt{n})\] be the
corresponding normalized sequence of vectors. We say that the sequence
$X_1,X_2,\dots$ has \emph{bounded moments} if the moments $\EE(X_i^k)$
form a bounded sequence for any $k \geq 1$. This implies that the
central moments $\mu_k(X_i)$ are bounded as well. Let
$w_n=||W_n||$. The definition of a balanced sequence of vectors $W_n$
is motivated by this setup since, for instance, in this case $m_n^2 =
\EE(w_n^2)=\frac{1}{n}\sum_{i=1}^n \EE(X_i^2)$ is bounded if the
sequence $X_1,X_2,\dots $ has bounded moments. Similarly,
$\Sigma_n^2=n\cdot \var{w_n^2}=n\cdot
\frac{1}{n^2}\sum_{i=1}^n\var{X_i^2}$ is bounded in this case.

\begin{lemma} \label{lemma:balanced}
Let $X_1,X_2,\dots$ be independent with bounded moments. If $m_n$ is
bounded away from 0 then $W_n=(X_1/\sqrt{n},\dots,X_n/\sqrt{n})$ is
balanced.
\end{lemma}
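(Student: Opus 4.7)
The plan is to verify the three remaining conditions that make $W_n$ balanced: that $m_n$ is bounded above, that $n^2\mu_3(w_n^2)$ is bounded, and that $n^2\mu_4(w_n^2)$ is bounded. Since $w_n^2 = \frac{1}{n}\sum_{i=1}^n X_i^2$ and the sequence $X_1,X_2,\dots$ has bounded moments, we have $m_n^2 = \frac{1}{n}\sum_{i=1}^n \EE(X_i^2) \leq C$ for some constant $C$, so $m_n$ is bounded. For the central moments I would introduce the centered variables $Z_i = X_i^2 - \EE(X_i^2)$; these are independent, mean-zero, and have bounded moments of every order, since the moments of $X_i$ do. Then
\begin{align*}
w_n^2 - m_n^2 = \frac{1}{n}\sum_{i=1}^n Z_i, \qquad \mu_k(w_n^2) = \frac{1}{n^k}\EE\Bigl(\bigl(\textstyle\sum_{i=1}^n Z_i\bigr)^k\Bigr).
\end{align*}

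The heart of the argument is a multinomial expansion combined with the independence and vanishing-mean of the $Z_i$. Writing $(\sum_i Z_i)^k$ as a sum over multi-indices $(i_1,\dots,i_k)$, independence factors the expectation and any multi-index in which some index appears with multiplicity exactly one gives a factor $\EE(Z_{i_j}) = 0$. For $k=3$ the only surviving multi-indices are the diagonal ones $i_1=i_2=i_3$, leaving $\sum_i \EE(Z_i^3) = O(n)$; dividing by $n^3$ gives $\mu_3(w_n^2) = O(n^{-2})$. For $k=4$ the surviving patterns are the diagonal $\sum_i \EE(Z_i^4) = O(n)$ and the three pair-partitions contributing $3\sum_{i\neq j}\EE(Z_i^2)\EE(Z_j^2) = O(n^2)$; dividing by $n^4$ gives $\mu_4(w_n^2) = O(n^{-2})$. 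Multiplying by $n^2$ in each case yields the required bounded sequences.

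The only step that takes a little care is the combinatorial bookkeeping for $k = 4$, namely identifying that all multi-indices with a singleton index contribute zero and that the only $O(n^2)$ contribution comes from the three pair-partitions of $\{1,2,3,4\}$. Everything else, including the boundedness of $\EE(Z_i^k)$, follows immediately from the assumption of bounded moments on the $X_i$. Combining the three bounds with the hypothesis that $m_n$ is bounded away from $0$ gives exactly the definition of a balanced sequence.
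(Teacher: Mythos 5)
Your proof is correct and follows essentially the same route as the paper: both reduce $\mu_3(w_n^2)$ and $\mu_4(w_n^2)$ to central moments of the independent summands $X_i^2$, obtaining $\frac{1}{n}\sum_i\mu_3(X_i^2)$ and $\frac{1}{n^2}\sum_i\mu_4(X_i^2)+\frac{6}{n^2}\sum_{i<j}\mu_2(X_i^2)\mu_2(X_j^2)$ (your $3\sum_{i\neq j}$ term is the same quantity). The only difference is that you spell out the multinomial expansion and the vanishing of terms with a singleton index, which the paper leaves implicit.
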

\begin{proof}
We need to show that $n^2\mu_3(w_n^2)$ and $n^2\mu_4(w_n^2)$ are
bounded. Note that
$n^2\mu_3(w_n^2)=n^2\mu_3(\frac{1}{n}\sum_{i=1}^nX_i^2)=\frac{1}{n}\sum_{i=1}^n\mu_3(X_i^2)$,
which is bounded. Similarly,
\[n^2\mu_4(w_n^2)=\frac{1}{n^2}\mu_4(\sum_{i=1}^nX_i^2)=\frac{1}{n^2}\sum_{i=1}^n\mu_4(X_i^2)+\frac{6}{n^2}\sum_{i<j}\mu_2(X_i^2)\mu_2(X_j^2),\] which is bounded as well.
\end{proof}

\subsection{Expected length of curves} \label{sec:curves}

Consider a sequence of stochastic processes $f_1,f_2,\dots$ defined on
$[0,1]$ such that for any $t \in [0,1]$, $f'_1(t),f'_2(t),\dots$ are
independent. Let
\begin{align*}
  \phi_n:[0,1] \to \RR^n:t \mapsto
  (f_1(t)/\sqrt{n},\dots,f_n(t)/\sqrt{n})
\end{align*}
and assume that $\phi_n$ is a stochastic immersion or a mean square
smooth process on $(0,1)$. As in \secref{sec:versus}, we also assume
that $\phi_n$ has an expected metric and is measurable when seen as a
map $\Omega \times [0,1] \to \RR^n$, where $\Omega$ is a probability
space. Furthermore, suppose that the sequence $f_1',f_2',\dots$ has
\emph{uniformly bounded moments} in the sense that for any $k \geq 1$,
there is a constant $C_k$ such that $|\EE(f_i'(t)^k)| \leq C_k$ for
all $i$ and $t \in [0,1]$. Let $w_n(t) = ||\phi'_n(t)||$, $m_n(t) =
\sqrt{\EE(w_n^2(t))}$ and $\Sigma_n(t) = \sqrt{n\cdot
  \var{w_n^2(t)}}$. Then $\sup_{n,t} \Sigma_n(t) <
\infty$. Furthermore, we assume that $m_n$ is uniformly bounded away
from 0, meaning that $0 < \inf_{n,t} m_n(t)$. Let $L_n$ denote the
length of $\phi_n$ in the expected metric and $l_n$ the expected
length of $\phi_n$. In other words
\begin{equation} \label{eq:lengths}
L_n =
\int_0^1 \EE(||\phi_n'(t)||^2)^{1/2} \;\dif{t}, \quad l_n = \EE(\int_0^1
||\phi_n'(t)|| \;\dif{t}) = \int_0^1 \EE(||\phi_n'(t)||) \;\dif{t}.
\end{equation}
Let $\sup_{n,t} \Sigma_n(t) < A$ and $0 < b < \inf_{n,t} m_n(t)$. By
\lemmaref{lemma:balanced} and Eq.~\ref{eq:estimate}, for any $t \in [0,1]$
there is a $N_t >0 $ such that $m_n(t) - \EE(w_n(t)) \leq A^2/(8n
\cdot b^3)$ for all $n > N_t$. In fact, due to the uniform bounds on
the moments $\EE(f_i'(t)^k)$ we have that for large enough $n$,
$m_n(t) - \EE(w_n(t)) \leq A^2/(8n \cdot b^3)$ for all $t \in [0,1]$.

\begin{proposition} \label{prop:lengths}
With $\sup_{n,t} \Sigma_n(t) < A$ and $0 < b < \inf_{n,t} m_n(t)$, 
\begin{align*}
   0
  \leq \frac{L_n-l_n}{L_n} \leq \frac{A^2}{8nb^4},
\end{align*}
for large enough $n$.
\end{proposition}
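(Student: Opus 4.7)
The plan is to reduce the statement, via the definitions in Eq.~\ref{eq:lengths}, to a pointwise-in-$t$ estimate on $m_n(t) - \EE(w_n(t))$, then integrate and divide. Concretely, I would write
\begin{align*}
L_n - l_n = \int_0^1 \bigl(m_n(t) - \EE(w_n(t))\bigr) \, \dif{t},
\end{align*}
and handle numerator and denominator separately.

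For the lower bound $L_n - l_n \geq 0$ I would apply Jensen's inequality to the concave function $\sqrt{\cdot}$, giving $\EE(w_n(t)) = \EE(\sqrt{w_n^2(t)}) \leq \sqrt{\EE(w_n^2(t))} = m_n(t)$ for every $t$, and then integrate.

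For the upper bound, I would apply \lemmaref{lemma:balanced} pointwise: for each fixed $t \in [0,1]$ the random variables $f_i'(t)$ are independent with bounded moments (by the uniform moment hypothesis) and $m_n(t)$ is bounded away from $0$ (uniformly), so the sequence $W_n(t) = (f_1'(t)/\sqrt{n},\dots,f_n'(t)/\sqrt{n})$ is balanced. Then \propref{prop:estimate}, via the form given in \remref{rem:estimate}, yields $0 \leq m_n(t) - \EE(w_n(t)) \leq A^2/(8nb^3)$ once $n$ is large. The key point (already noted in the paragraph preceding the proposition) is that the constants arising in this pointwise estimate can be chosen independently of $t$: the uniform bounds $\sup_{n,t}\Sigma_n(t) < A$, $\inf_{n,t} m_n(t) > b$, and the uniformly bounded moments of $f_i'(t)$ feed into the proof of \propref{prop:estimate} without changing the $\mathcal{O}(n^{-2})$ constants across $t$, so a single threshold $N$ works for all $t \in [0,1]$ simultaneously. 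Integrating gives $L_n - l_n \leq A^2/(8nb^3)$ for $n > N$.

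Finally, since $m_n(t) > b$ for all $t$, I would bound the denominator by $L_n = \int_0^1 m_n(t)\,\dif{t} \geq b$, and combine:
\begin{align*}
0 \leq \frac{L_n - l_n}{L_n} \leq \frac{A^2/(8nb^3)}{b} = \frac{A^2}{8nb^4}.
\end{align*}
The only nontrivial point in the whole argument is the uniformity in $t$ of the pointwise estimate from \propref{prop:estimate}; this is what the \emph{uniformly bounded moments} hypothesis is designed to deliver, and I would make that step explicit by tracing through where the constants in the proof of \propref{prop:estimate} come from and noting they depend only on $A$, $b$, and the uniform moment bounds $C_k$, not on $t$.
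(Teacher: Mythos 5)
Your argument is correct and follows essentially the same route as the paper: establish the pointwise bound $0 \leq m_n(t) - \EE(w_n(t)) \leq A^2/(8nb^3)$ uniformly in $t$ (via \lemmaref{lemma:balanced}, \propref{prop:estimate} and the uniform moment bounds, exactly as in the paragraph preceding the proposition), integrate over $[0,1]$, and divide by $b < L_n$. Your explicit mention of Jensen's inequality for the lower bound and of tracing the $t$-independence of the constants only makes explicit what the paper leaves implicit.
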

\begin{proof}
Since for large enough $n$, $0 \leq m_n(t) - \EE(w_n(t)) \leq A^2/(8n
\cdot b^3)$ for all $t \in [0,1]$, integrating both sides over $[0,1]$
gives $0\leq L_n-l_n\leq A^2/(8nb^3)$. Divide by $b$ and note that $b
< L_n$ for all $n$.
\end{proof}

This result implies that in high codimension we can minimize expected
energy instead of minimizing expected length. A curve minimizing
expected energy can be found by computing the expected metric and
using standard tools from differential geometry to recover a geodesic
associated with this metric. It it interesting to note that the length
in the expected metric bounds the expected length from
above. Consequently, by minimizing expected energy we minimize an
upper bound on expected length. Such notions are standard in
\emph{variational inference} \cite{blei2017variational}.

\begin{remark}
Expected speed and expected length of random curves are quite natural
quantities to consider. For instance, minimal expected length is an
interesting candidate for a distance measure along random
manifolds. What about simply taking the expectation value $\EE(\phi_n)$
component wise and considering the length of this curve? This seems
natural enough but is not enough to capture the notion of expected
length. The velocity of the curve $\EE(\phi_n)$ at $t \in [0,1]$ is
$||\EE(\phi'_n(t))||$. Let $t \in [0,1]$ and assume for simplicity that
$\EE(||\phi'_n(t)||)$ converges as $n \to \infty$. By
\remref{rem:estimate}, $\EE(||\phi'_n(t)||^2)$ converges as well and
\begin{align*}
\lim_{n \to \infty} \EE(||\phi'_n(t)||^2)= \lim_{n \to \infty} \EE(||\phi'_n(t)||)^2.
\end{align*}
Also, $\EE(||\phi'_n(t)||^2)=\frac{1}{n}\sum_i \EE(f'^2_i(t))=\frac{1}{n}\sum_i
\var{f'_i(t)} + ||\EE(\phi'_n(t))||^2$.  Thus,
\begin{align*}
  \lim_{n \to \infty} ||\EE(\phi'_n(t))|| \neq \lim_{n \to \infty}
  \EE(||\phi'_n(t)||)
\end{align*}
unless $\frac{1}{n}\sum_i \var{f'_i(t)} \to 0$ as $n \to \infty$.
\end{remark}

\section{Gaussian processes}\label{sec:empirical}

We will now have a closer look at the case of Gaussian processes
\cite{RW06}.

\subsection{Definitions}

For a smooth function $h:\RR^d \to \RR:(p_1,\dots,p_d) \mapsto
h(p_1,\dots,p_d)$ we will use the notation
\begin{align*}
  h_{p_{i_1},\dots,p_{i_j}}=\frac{\partial^jh}{\partial p_{i_1} \dots \partial p_{i_j}},
\end{align*}
where $i_1,\dots,i_j \in \{1,\dots,d\}$. A Gaussian process $f:\RR^d
\to \RR$ is a stochastic process such that for any $q_1,\dots,q_r \in
\RR^d$, $(f(q_1),\dots,f(q_r))$ is a Gaussian vector.  The
distribution of the vector $(f(q_1),\dots,f(q_r))$ is determined by
the mean function $\mu=\EE(f):\RR^d \to \RR:p \mapsto \EE(f(p))$ and
covariance function 
\begin{align*}
  k:\RR^d \times \RR^d \to \RR:(p,q) \mapsto
  \covar{f(p)}{f(q)}.
\end{align*}
The function $k$ is also known as the kernel of
$f$. We will assume that $\mu$ and $k$ are real analytic functions.
As explained in \exref{ex:gaussian}, this implies that $f$ is mean
square smooth. For such a Gaussian process $f: \RR^d \to \RR$, the
partial derivative $f_{p_i}$ with respect to $p_i$ for $i \in
\{1_,\dots,d\}$ is a Gaussian process with mean function $\mu_{p_i}$
and kernel $k_{p_i,q_i}$. More generally, for $p,q \in \RR^d$,
$f_{p_i}(p)$ and $f_{p_j}(q)$ have covariance
\begin{align*}
  \covar{f_{p_i}(p)}{f_{p_j}(q)}=k_{p_i, q_j}(p,q),
\end{align*}
see \cite{RW06}. Gaussian processes $f_1,\dots,f_m:\RR^d \to \RR$ are
\emph{independent} if \[(f_1(q_1),\dots,f_1(q_r)), \dots,
(f_m(q_1),\dots,f_m(q_r))\] are independent for all $q_1,\dots,q_r \in
\RR^d$. A Gaussian process $f:\RR^d \to \RR^m: p \mapsto
(f_1(p),\dots,f_m(p))$ is a stochastic process such that $\sum_{i=1}^m
\alpha_i f_i$ is a Gaussian process for all $\alpha_1,\dots,\alpha_m
\in \RR$. We will call such a process \emph{symmetric} if
$f_1,\dots,f_m$ are independent and all have the same kernel.

\subsection{Gaussian process latent variable models}

Gaussian processes are used in machine learning in the context of
Gaussian process latent variable models (GPLVMs), see \cite{L05, RW06,
  LHTV14}. In GPLVMs we consider a Gaussian process prior $F:\RR^d \to
\RR^m$ which is symmetric, has zero mean and whose kernel is of a
particular form, depending on finitely many hyper parameters. A common
choice is a Radial Basis Function (RBF) kernel
\begin{align*}
  K(p,q)=\sigma_0^2 e^{-\frac{1}{2l^2}||p-q||^2}
\end{align*}
with variance $\sigma_0^2$ and length scale $l > 0$. Another example
is the linear covariance case where the kernel is given by the
Euclidean scalar product: $K(p,q)=p^Tq$ where $p, q \in \RR^d$ are
seen as column vectors. In this case, the GPLVM reduces to
probabilistic principal component analysis \cite{L05}. Typically, the
data is assumed to be observed with additive iid Gaussian noise with
variance $\sigma_1^2$, which introduces one more hyper parameter for
the model. Given a finite set of data points $Y \subset \RR^m$ we
solve a maximum likelihood type problem to compute the hyper
parameters of the model as well as a set of latent points $X \subseteq
\RR^d$ corresponding to the data. This can be done for example using
the software package \cite{GPy}. Combined with Gaussian process
regression, the result is a Gaussian process posterior $f:\RR^d \to
\RR^m:p \mapsto f(p)=(f_1(p),\dots,f_m(p))$ which fits the data. The
process $f$ is a symmetric Gaussian process with mean and kernel given
below.

For matrices $A \in \RR^{d \times r}$ and $B \in \RR^{d \times s}$
with columns $A=(a_1,\dots,a_r)$ and $B=(b_1,\dots,b_s)$ we will use
$K(A,B)$ to denote the matrix given by
$\{K(A,B)\}_{i,j}=K(a_i,b_j)$. Assume that the data points are
distinct, let $N=|X|=|Y|$ be the number of data points and consider $Y
\in \RR^{m\times N}$ and $X \in \RR^{d\times N}$. Let
$R(X)=(K(X,X)+\sigma_1^2 \id{\RR^N})$ and assume that $R(X)$ is
invertible. This is the case if the kernel $K$ is positive definite in
the sense that $K(X,X)$ is positive definite, or $K$ is semi-positive
definite and $\sigma_1 \neq 0$. The mean $\mu$ and kernel $k$ of the
posterior process are then given by $\mu(p)=YR(X)^{-T}K(X,p)$ and
$k(p,q) = K(p,q) - K(p,X)R(X)^{-1}K(X,q)$, see \cite{RW06} for
details.

Let $f:\RR^d \to \RR^m$ be the posterior of a GPLVM. If the kernel of
the prior is real analytic then so is the kernel and mean of $f$ and
hence $f$ is mean square smooth. Let $J_f$ denote the mean square
Jacobian. For $p \in \RR^d$, the metric induced by $f$ at $p$ is
$J_f(p)^TJ_f(p)$, which follows a non-central Wishart distribution,
see \cite{M05}.

\begin{example}
Consider a GPLVM $f:\RR^d \to \RR^m$ with prior kernel $K(p,q)=p^Tq$
where $p,q \in \RR^d$ are column vectors. As we shall see, the
expected metric is a constant matrix in this case. This means that the
mean manifold is flat and that geodesics are straight lines for this
choice of kernel for the prior. Let $\mu=\EE(f):\RR^d \to \RR^m$ and
note that $\mu(p)=YR(X)^{-T}X^Tp$ with notation as above. Hence
$\EE(J_f)=\J_{\mu}=YR(X)^{-T}X^T$ is constant. Differentiating the
posterior kernel we get that the expected metric is given by
\[
\EE(J_f^TJ_f)=\EE(J_f)^T\EE(J_f)+m\cdot(\id{d}-XR(X)^{-1}X^T).
\]

\end{example}

\subsubsection{Empirical illustration}

Consider the posterior process $f:\RR^d \to \RR^m$ of a GPLVM with
$f(p)=(f_1(p),\dots,f_m(p))$ and its projections $\Phi_n:\RR^d \to
\RR^n:p \mapsto (f_1(p),\dots,f_n(p))$ for $n \leq m$. Given a curve
$c:[0,1] \to \RR^d$ we acquire a sequence of Gaussian curves $\phi_n =
\Phi_n \circ c:[0,1] \to \RR^n$ up to $n=m$.

\begin{example}
To illustrate the results of \secref{sec:curves}, consider the
120$\times$120-pixel image of a bird in
\figref{fig:bird_rot-a}. Images of this resolution may be seen as
points in $\RR^m$, where $m=120^2=14400$. We produce a sequence of $N$
points in $\RR^m$ by rotating the image (using interpolation) by an
angle $2\pi k/N$ for $k=0,\dots,N-1$, see \figref{fig:bird_rot}.
\begin{figure}[ht]
\centering
\begin{subfigure}[]{.13\linewidth}
\includegraphics[scale=0.5]{./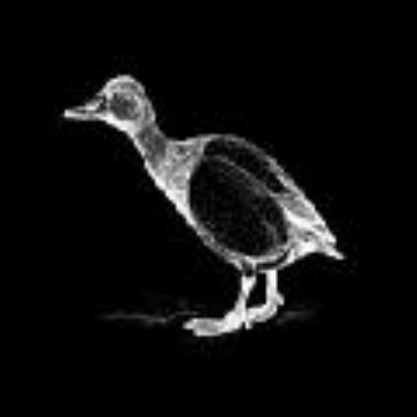}
\caption{}\label{fig:bird_rot-a}
\end{subfigure}
\begin{subfigure}[]{.13\linewidth}
\includegraphics[scale=0.5]{./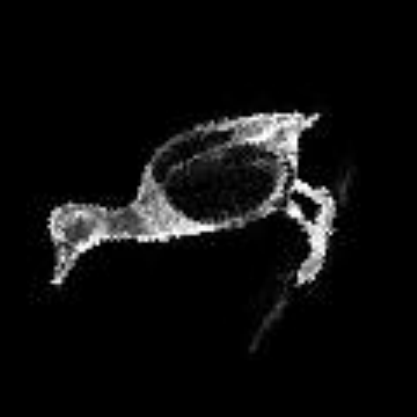}
\caption{}\label{fig:bird_rot-b}
\end{subfigure}
\begin{subfigure}[]{.13\linewidth}
\includegraphics[scale=0.5]{./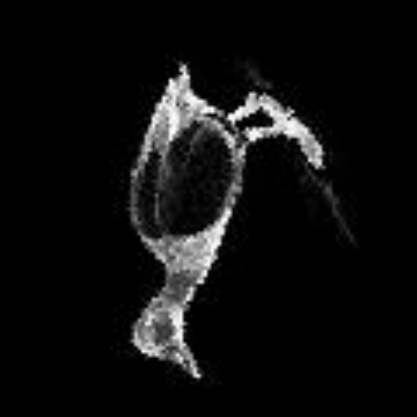}
\caption{}\label{fig:bird_rot-c}
\end{subfigure}
\begin{subfigure}[]{.13\linewidth}
\includegraphics[scale=0.5]{./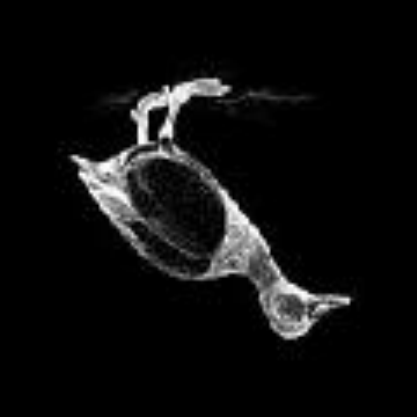}
\caption{}\label{fig:bird_rot-d}
\end{subfigure}
\begin{subfigure}[]{.13\linewidth}
\includegraphics[scale=0.5]{./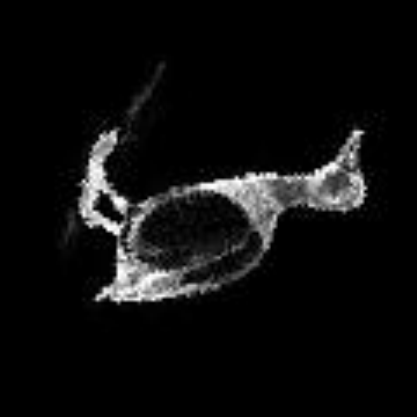}
\caption{}\label{fig:bird_rot-e}
\end{subfigure}
\begin{subfigure}[]{.13\linewidth}
\includegraphics[scale=0.5]{./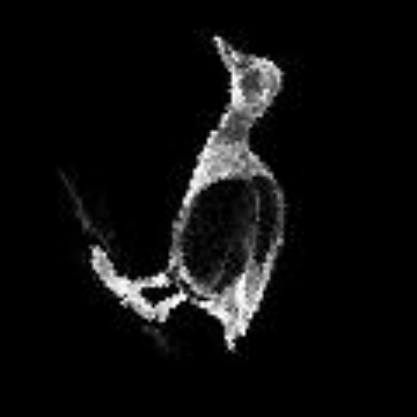}
\caption{}\label{fig:bird_rot-f}
\end{subfigure}
\begin{subfigure}[]{.13\linewidth}
\includegraphics[scale=0.5]{./bird_full0.pdf}
\caption{}\label{fig:bird_rot-g}
\end{subfigure}
\caption{A rotated image.}
\label{fig:bird_rot}
\end{figure}

Feeding these to a GPLVM with RBF kernel and $d$-dimensional latent
space we obtain a Gaussian process $f:\RR^d \to \RR^m$ together with a
sequence of $N$ latent points $X \subset \RR^d$. For any $n \leq m$ we
have a Gaussian process $\Phi_n:\RR^d \to \RR^n$ given by projection onto
the $n$ first coordinates.

Let $c:[0,1] \to \RR^d$ be the line segment joining the first two
points of $X$ and put $\phi_n = \Phi_n \circ c$. Let $l_n$ and $L_n$
be given as in Eq.~\ref{eq:lengths}. Using \cite{GPy} with $d=6$ and
$N=100$ we have estimated $l_n$ and $L_n$ empirically by sampling the
posterior process. \figref{fig:decay-a} displays the relative error
$(L_n - l_n)/L_n$ as a function of data dimension $n$. We have also
included the graph of a reference function $h(n)=A^2/(8nb^4)$ for
empirical estimates of constants $A$ and $b$ as in
\propref{prop:lengths}. The difference between $(L_n - l_n)/L_n$ and
$h(n)$ is plotted in \figref{fig:decay-b}. This illustrates
\propref{prop:lengths} as the theory matches the empirical study.

\begin{figure}[ht]
\centering
\begin{subfigure}[]{0.45\columnwidth}
\centering \includegraphics[scale=0.4, clip, trim={0pt 0pt 0pt
    0pt}]{./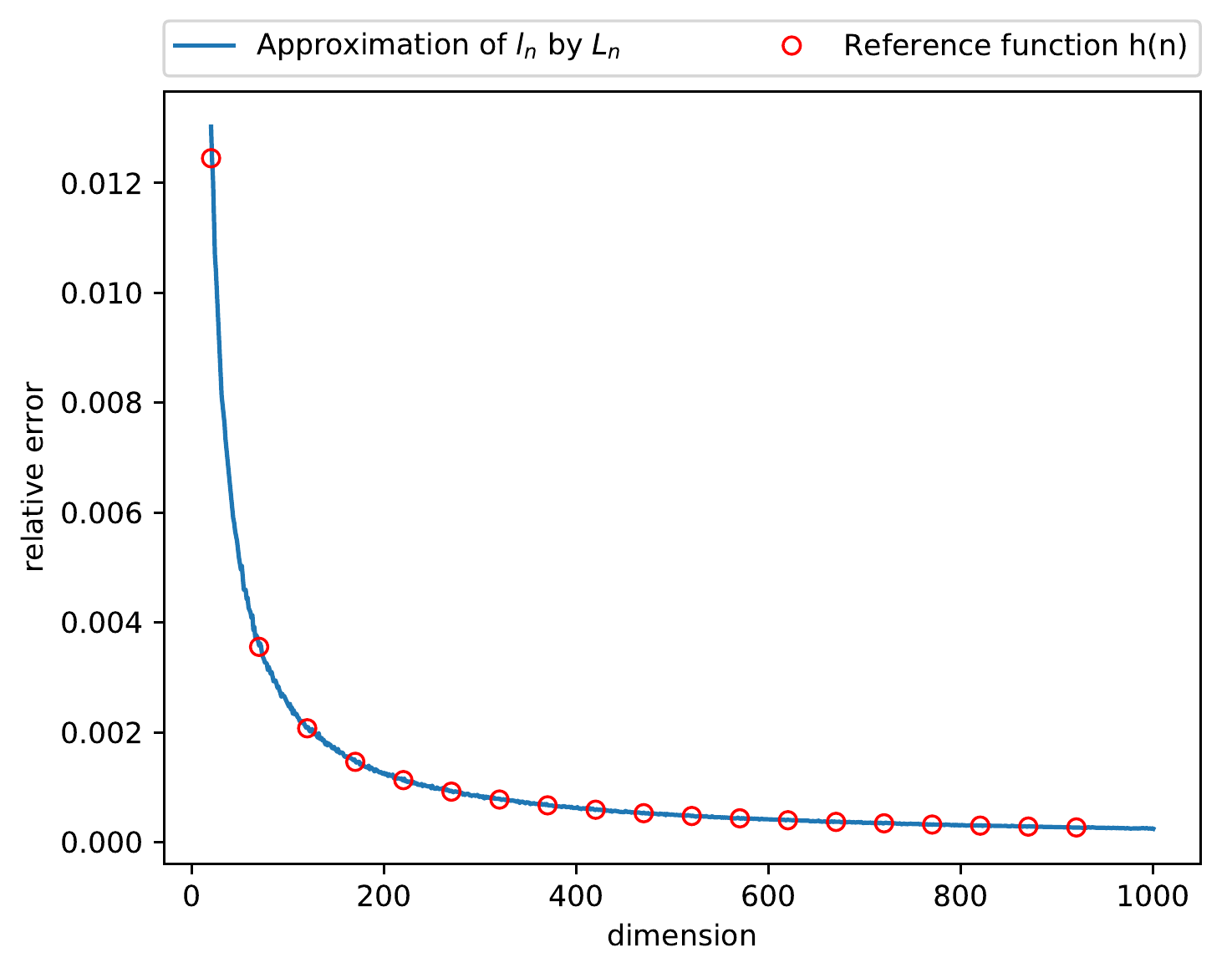}
\caption{Plots of $(L_n-l_n)/L_n$ and $h(n)$.}\label{fig:decay-a}
\end{subfigure}
\begin{subfigure}[]{0.4\columnwidth}
\centering \includegraphics[scale=0.4, clip, trim={0pt 0pt 0pt
    0pt}]{./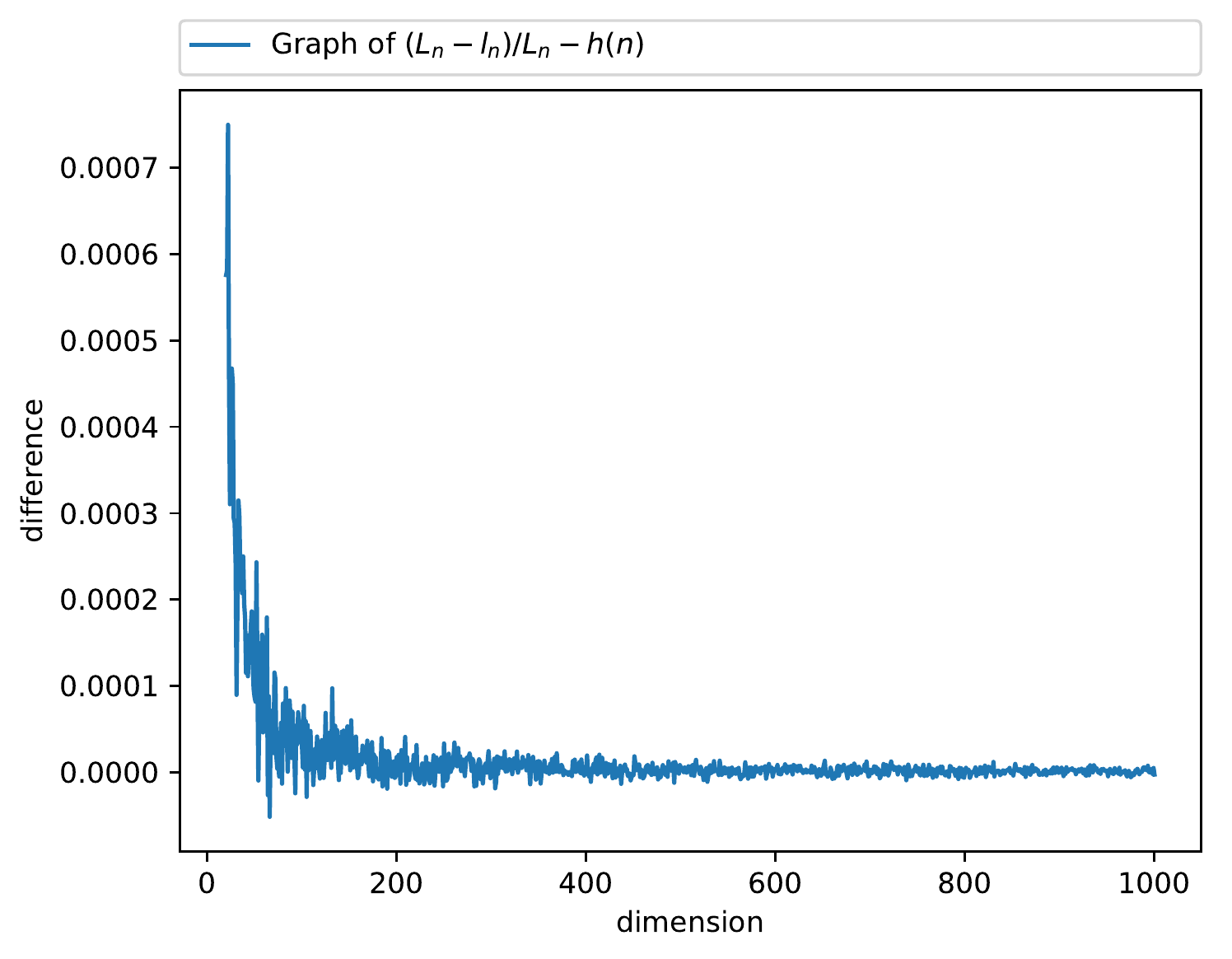}
\caption{Plot of $(L_n-l_n)/L_n - h(n)$.}\label{fig:decay-b}
\end{subfigure}
  \caption{}
\label{fig:decay}
\end{figure}

\end{example}

\section{Concluding remarks}
Starting from the goal of learning an \emph{operational
  representation}, we have studied a general class of generative
latent variable models $\x = f(\z)$, where $f$ is a smooth stochastic
process. The latent space can here be endowed with a random Riemannian
metric, such that elementary operations (interpolations, distances,
integration, etc.) can be defined in a way that is invariant to
reparametrizations of the model.

Mathematically, this is a natural approach, but it does not lend
itself easily to computations as computational tools do not exist for
working with random Riemannian manifolds. In this paper we have
provided a deterministic approximation to this large class of random
Riemannian metrics, and provided tight approximation bounds.  In
particular, it is worth noting that the bound is very tight when data
is high dimensional, which is the common case for machine
learning. Within this deterministic approximation, we can apply
standard tools for computations over Riemannian manifolds
\cite{hauberg:nips:2012, hennig:aistats:2014, Freifeld:CVPR:2014},
and thereby realize the idea of operational representation learning.

\section*{Acknowledgments}
This work was supported by a research grant (15334) from VILLUM FONDEN. This project has
received funding from the European Research Council (ERC) under the European
Union's Horizon 2020 research and innovation programme (grant agreement n\textsuperscript{o} 757360). 

\bibliography{collection}

\end{document}